\documentclass[conference]{IEEEtran}
\IEEEoverridecommandlockouts
\usepackage{cite}
\usepackage{booktabs}
\usepackage{amsmath,amssymb,amsfonts}
\usepackage{amsthm,bm}
\usepackage[T1]{fontenc}
\usepackage{url}
\usepackage{graphicx}
\usepackage{textcomp}
\usepackage{xcolor}
\def\BibTeX{{\rm B\kern-.05em{\sc i\kern-.025em b}\kern-.08em
    T\kern-.1667em\lower.7ex\hbox{E}\kern-.125emX}}

\makeatletter
\newcommand{\linebreakand}{%
  \end{@IEEEauthorhalign}
  \hfill\mbox{}\par
  \mbox{}\hfill\begin{@IEEEauthorhalign}
}
\makeatother

\newtheorem{proposition}{Proposition}

\newcommand{\T}{^{\top}}

\newcommand{\R}{\mathbb{R}}

\newcommand{\traj}{\mathcal{T}}
\newcommand{\Hh}{\mathsf{H}}
\newcommand{\Aauth}{\mathsf{A}}
\newcommand{\Pphys}{\mathsf{P}}

\begin{document}

\title{From LLM Plans to Authorized Motion: Execution-Bound Authorization for Excavators%
\thanks{This research was supported by Mechanical Equipment Industry Technology Development Program through the Korea Planning \& Evaluation of Industrial Technology (KEIT) funded by the Korea government (Ministry of Trade, Industry and Energy (MOTIE)) (RS-2024-00442168).}
}

\author{
\IEEEauthorblockN{Mehdi Heydari Shahna\textsuperscript{*}}
\IEEEauthorblockA{
\textit{Faculty of Engineering}\\
\textit{and Natural Sciences}\\
\textit{Tampere University}\\
Tampere, Finland\\
mehdi.heydarishahna@tuni.fi\\
\textsuperscript{*}Corresponding author
}
\and
\IEEEauthorblockN{Seihun Kim}
\IEEEauthorblockA{
\textit{Department of Defense}\\
\textit{Convergence Technology}\\
\textit{Korea University}\\
Seoul, Republic of Korea
}
\and
\IEEEauthorblockN{Soyi Jung}
\IEEEauthorblockA{
\textit{Department of Electrical}\\
\textit{and Computer Engineering}\\
\textit{Ajou University}\\
Suwon, Republic of Korea
}
\linebreakand
\IEEEauthorblockN{Soohyun Park}
\IEEEauthorblockA{
\textit{Division of Computer Science}\\
\textit{Sookmyung Women's University}\\
Seoul, Republic of Korea
}
\and
\IEEEauthorblockN{Jouni Mattila}
\IEEEauthorblockA{
\textit{Faculty of Engineering}\\
\textit{and Natural Sciences}\\
\textit{Tampere University}\\
Tampere, Finland
}
\and
\IEEEauthorblockN{Joongheon Kim}
\IEEEauthorblockA{
\textit{Department of Electrical}\\
\textit{and Computer Engineering}\\
\textit{Korea University}\\
Seoul, Republic of Korea
}}

\maketitle

\begin{abstract}
An LLM-directed excavator can execute mechanically feasible motion whose destination or authority no longer matches the operator's approval. We propose execution-bound authorization for the approval--dispatch--execution interval. A one-use lease binds the action, generated motion, mission contract and admission tolerance; atomic dispatch rechecks current authority and measured joint drift. Periodic supervision detects revocation and requests a quintic stop with a carry-pitch endpoint, selected through sampled joint, stroke, flow and reference-acceleration checks. The framework connects exact motion identity to dispatch integrity and load-aware interruption. Evaluation combines a 672-case interface suite, four loaded digging cases in synchronous simulation, unloaded MuJoCo/Pinocchio transfers and two command-level LLM planners. The gateway admitted all 120 benign interface cases and produced zero unauthorized dispatches in the attack suite. Loaded dispatch-window tests produced zero unauthorized deposition. During simulated dump-phase revocation, maximum closed-site deposition decreased from 1.23 to 0.33\,m$^3$; loaded-slew stops used less than 75\% of nominal swing torque. These results demonstrate motion-bound authorization and quantify the retention and stopping behavior of supervised excavation skills.
\end{abstract}

\begin{figure}[!b]
\centering
\includegraphics[width=\columnwidth]{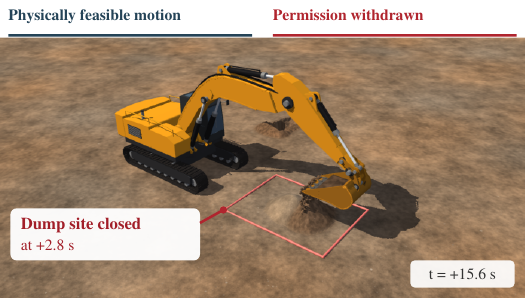}
\caption{The authorization challenge: a mechanically feasible dump continues after the operator closes its destination. This recorded simulation frame illustrates why current permission must govern an ongoing loaded skill as well as its initial approval.}
\label{fig:motivation}
\end{figure}

\section{Introduction}
Large language models (LLMs) are bringing natural-language interaction into everyday work. A study of ChatGPT usage documents workplace writing and decision support~\cite{chatterji2025chatgpt}, while field evidence demonstrates productivity gains from conversational assistance~\cite{brynjolfsson2025work}. Robotics research extends these capabilities to task planning, policy generation and spatial reasoning~\cite{liang2023codeaspolicies,singh2023progprompt,huang2023voxposer}. For industrial machines, language-based work orders offer a direct way to express goals and use site information, provided the resulting actions remain under the operator's authority.

The same contextual flexibility creates an attack surface. Instructions embedded in untrusted documents or messages can redirect LLM-integrated applications~\cite{greshake2023indirect}, and robot jailbreaks can elicit harmful physical actions~\cite{robey2025robopair}. An excavator can consequently perform a smooth, mechanically feasible dump at an unauthorized destination. Authority can also change after legitimate approval: Fig.~\ref{fig:motivation} shows a loaded skill continuing into a site that the operator has closed. Industrial adoption therefore requires checks that connect the meaning and authority of a work order to the motion and material transfer it produces.

Mission-bound authorization, contextual robot guardrails and agent privilege controls provide complementary defenses~\cite{chepuri2026authguard,ravichandran2026roboguard,yang2024safetychip,debenedetti2025camel,shi2026progent}. A long hydraulic skill additionally requires the approved action to remain bound to its exact motion across dispatch and execution. An individually valid substitute can otherwise replace it, an approval can become stale, or a loaded slew can outlive its permission. Interrupting the last case requires a checked stopping motion that accounts for the carried material; freezing the reference creates a substantial swing-drive transient in our simulations.

We propose \emph{execution-bound authorization} for this approval--dispatch--execution interval (Fig.~\ref{fig:teaser}) for the excavator application used in \cite{shahna2026act}, with three contributions: 1) A motion-bound lease and atomic dispatch check, with a first-order admission tolerance derived from sampled joint and cylinder-stroke margins. 2) Periodic authority supervision and a carry-pitch quintic stop, selected by sampled feasibility checks and evaluated for load retention and drive transients in synchronous simulation. 3) Component and lifecycle comparisons across 14 gateway configurations, 672 interface cases, four loaded simulated operating cases, unloaded cross-engine transfers and two local LLM planners, with deposition scored from material landings.
The implementation adds a mission/lease gateway, execution monitor and stop generator around RoboCompiler's mechanism and actuator interfaces. 

The code, results, and videos are publicly available at \textcolor{blue}{\url{https://github.com/Mehdi-Heydari-Shahna/ExecutionBound}}.

\begin{figure*}[t]
\centering
\includegraphics[width=\textwidth]{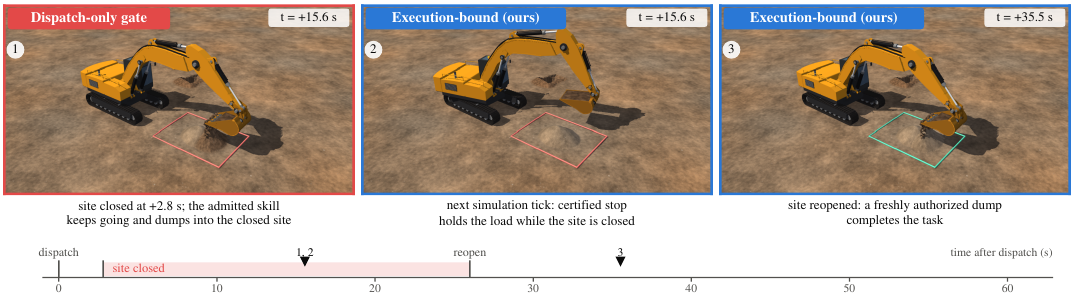}
\caption{Synchronous simulated revocation at 2.8\,s after dispatch. A dispatch-only gate deposits 1.23\,m$^3$ in the closed site; supervision detects closure at the next control tick and installs a sampled-admissible stop after synthesis, during which the plant is paused. A freshly authorized dump completes after reopening. Triangles mark the recorded frames; the outlined mound predates closure.}
\label{fig:teaser}
\end{figure*}

\section{Related Work}
\textbf{LLM security and authorization.} Prompt-injection and robot-jailbreak studies motivate independent enforcement and adaptive evaluation~\cite{greshake2023indirect,robey2025robopair,zhang2025badrobot,debenedetti2024agentdojo,nasr2026adaptive}. CaMeL separates control and data flow using capabilities~\cite{debenedetti2025camel}; Progent checks tool calls and updates privilege policies during execution~\cite{shi2026progent}; AttriGuard attributes invocations to their causes~\cite{he2026attriguard}. Safety Chip and RoboGuard enforce temporal or contextual robot rules, with RoboGuard operating online in the control loop~\cite{yang2024safetychip,ravichandran2026roboguard}. AuthGuard-R binds symbolic actions to mission, capability, provenance and current state, with independent physical checking~\cite{chepuri2026authguard}. The present contribution binds approval to the generated motion descriptor, accommodates dispatch drift, and connects execution-time revocation to loaded stopping and measured deposition. The experiments compare these components and lifecycle checks within a common gateway/executor interface.

\textbf{Continuous enforcement and runtime assurance.} Capabilities~\cite{dennis1966capabilities}, complete mediation~\cite{saltzer1975protection}, caveated tokens~\cite{birgisson2014macaroons} and usage control~\cite{park2004ucon} underpin the authorization lifecycle; time-of-check-to-time-of-use races motivate dispatch revalidation~\cite{bishop1996toctou}. SROS2 provides complementary transport security~\cite{mayoral2022sros2}. Simplex~\cite{sha2001simplex}, shields~\cite{alshiekh2018shielding}, control barrier functions~\cite{ames2019cbf} and predictive safety filters~\cite{wabersich2021safetyfilter} motivate supervised fallback~\cite{hobbs2023rta}. Here, authority loss triggers a sampled-admissible stopping reference whose carry-pitch endpoint is evaluated through retained load, material landings and drive transients.

\textbf{Excavation.} Autonomous excavation combines task planning, hydraulic control and soil interaction~\cite{zhang2021autonomous,jud2021heap,egli2022soil,mattila2017survey,reece1964fee}. ISO~17757 and ISO~19014-4 provide machine-safety references~\cite{iso17757,iso19014}; cybersecurity risk assessment under IEC~62443-3-2 complements them~\cite{iec62443}. Execution-bound authorization connects mission permissions to the motion and material flow of an excavator skill.

\section{Problem Setting}
\label{sec:problem}
A trusted operator issues a versioned mission contract $M_v$ containing destination regions, allowed skills and numeric limits, and an expiring, revocable capability $C\preceq M_v$ whose scope is contained in that contract. The planner proposes an action $a$ specifying skill, destination, target and speed. A trusted generator $G$ produces a descriptor $\traj=G(a;\mathcal M)$ for mechanism $\mathcal M$, defining a reference $u_\traj(\tau)\in\R^4$ at elapsed skill time $\tau$. The coordinates $u=(u_1,u_2,u_3,u_4)\T$ are swing, boom, arm and bucket angles in radians. A physical certifier checks the reference, a gateway authorizes dispatch, and an executor drives the hydraulic controller. State $x$ contains measured joint coordinates and base pose. The loaded study uses a stationary work base; closing a destination withdraws permission to deposit material there.

The adversary controls external planner context, proposed actions, dispatch-request contents and timing, replays and unsigned metadata. The trusted base comprises the operator, issuer, contract store, generator, certifier, state and clock sources, gateway and executor. Message authentication codes (MACs), canonical hashes and an exclusive actuator-command sink enforce this boundary; direct actuator access remains outside the adversary's interface.

The requirements are (R1)~current authority and exact approved motion identity at dispatch, (R2)~measured joint drift within the admission tolerance, (R3)~periodic detection of authority loss and installation of a stop when the duration search returns a sampled-admissible candidate, and (R4)~fresh authorization and admission before resumption. Completion and material retention are evaluated experimentally.

\section{Execution-Bound Authorization}
\label{sec:method}

\begin{figure*}[h!]
 \centering
\includegraphics[width=0.7\textwidth]{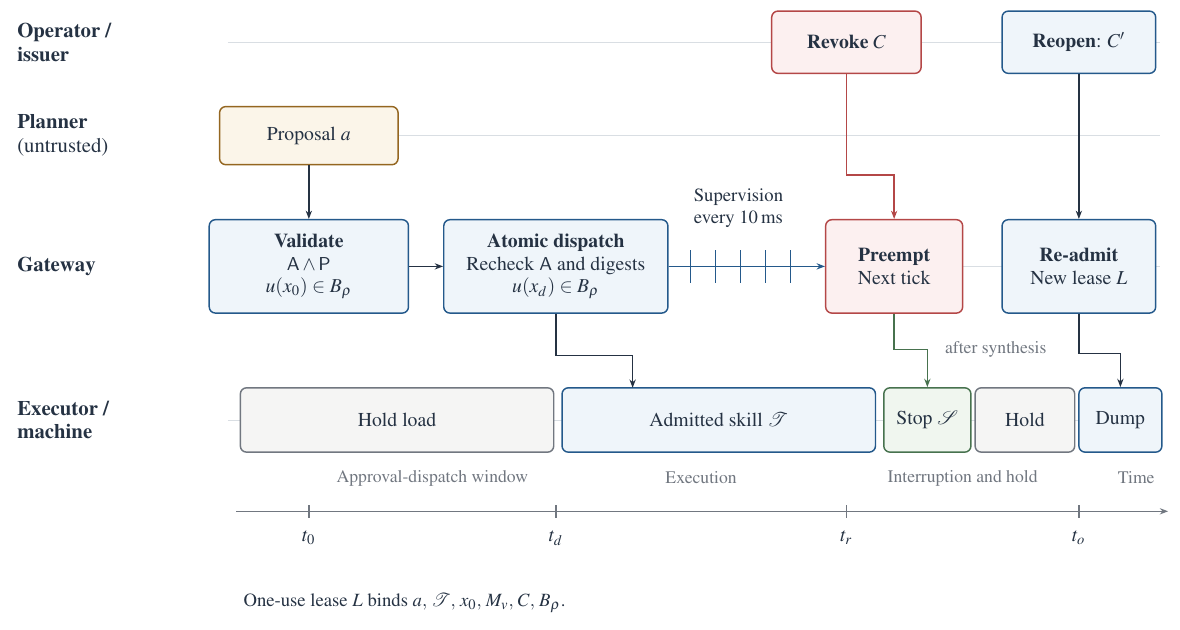}
\caption{Approval at $t_0$, dispatch at $t_d$, revocation at $t_r$ and reopening at $t_o$. Supervision closes the admitted record and installs stop $\mathcal S$ after synthesis. In the loaded simulations the plant is paused during synthesis; resumption requires a new lease.}
\label{fig:method}
\end{figure*}

Figure~\ref{fig:method} links admission, dispatch and supervision; Table~\ref{tab:algorithm} gives the corresponding procedure.

\subsection{Authorization and physical predicates}
Authorization $\Aauth(a,C,M_v,t)$ checks schema, mission limits, capability signature/scope/expiry/revocation, provenance, ordering and declared budgets at time $t$. Provenance MACs bind source classes to exact canonical field values. Skill, destination, frame and units require a trusted-command source; observations may populate only contract-approved numeric fields.
The predicate $\Pphys(\traj)$ evaluates sampled loop closure, joint limits, cylinder strokes and flow-derived rates. Loaded phases use seven compiled-graph samples and 41 flow samples. The reduced mechanism dynamics used for unloaded effort checks are \cite{shahna2025model}
\begin{equation}
M_r(u)\ddot u+h_r(u,\dot u)=B_r(u)f,
\label{eq:dynamics}
\end{equation}
where $M_r$ is reduced inertia, $h_r$ the bias vector, $B_r$ the actuator-port map and $f$ the actuator efforts. Unloaded transfers additionally check nominal effort and clearance to exclusion envelopes. RoboCompiler supplies the closed-chain maps~\cite{shahna2026robocompiler}. Throughout, a \emph{certificate} denotes acceptance by the stated sampled checks.

\subsection{First-order admission tolerance}
Let $s_i(u)$, $i\in\{1,2,3\}$, denote the distinct boom, arm and bucket stroke coordinates; the boom pair shares its coordinate. Let $u_j^\pm$ be the joint limits for $j\in\{1,2,3,4\}$ and $s_i^\pm$ the stroke limits. Define
\begin{equation}
\begin{aligned}
m_J(u)&=\min_j\{u_j-u_j^-,\;u_j^+-u_j\},\\
m_{S,i}(u)&=\min\{s_i(u)-s_i^-,\;s_i^+-s_i(u)\}.
\end{aligned}
\label{eq:margins}
\end{equation}
For reference samples $u_k=u_\traj(\tau_k)$, let $\ell_{ik}=\|\partial s_i/\partial u\,(u_k)\|_1$. The implementation uses 81 samples per phase, maximum spacing $\Delta\tau$, and largest sampled reference rate $\bar v=\max_k\|\dot u_\traj(\tau_k)\|_\infty$. Its first-order kinematic allowance is
\begin{equation}
\rho=\max\!\left\{0,\;\min_k\min\!\left[m_J(u_k),\min_i\frac{m_{S,i}(u_k)}{\ell_{ik}}\right]
-\frac{\bar v\Delta\tau}{2}\right\}.
\label{eq:tube}
\end{equation}
Stroke Jacobian norms are positive in the evaluated configurations. Define the linearized increment $\delta s_{ik}^{\rm lin}=(\partial s_i/\partial u)(u_k)\delta u$. H\"older's inequality gives $|\delta s_{ik}^{\rm lin}|\le\ell_{ik}\|\delta u\|_\infty$. Equation~\eqref{eq:tube} combines sampled joint margins, linearized stroke margins and the sampling reserve $\bar v\Delta\tau/2$ to set the joint-drift admission threshold. The unchanged reference undergoes its sampled nonlinear checks separately. The catalogue gives $\rho=0.096$--$0.115$\,rad, governed by the open bucket's joint margin and the sampling term. Median tolerance computation is 0.7\,s per skill.

\subsection{Lease and atomic dispatch}
At validation time $t_0$, snapshot $x_0$ must satisfy $\Aauth\land\Pphys$ and $\|u(x_0)-u_\traj(0)\|_\infty\le\rho$. The gateway then issues
\begin{equation}
\begin{aligned}
L=\mathrm{MAC}_{K_g}\big(&n,r,\Hh(a),\Hh(\traj),\Hh(x_0),\\[-1pt]
                      &\Hh(M_v),C,\mathcal B_\rho,t_0,t_e\big),
\end{aligned}
\label{eq:lease}
\end{equation}
where $K_g$ is the gateway's secret key, $n$ a nonce, $r$ the request identity, $\Hh$ a canonical hash, $t_e$ the expiry, and $\mathcal B_\rho=\{u:\|u-u_\traj(0)\|_\infty\le\rho\}$. Dispatch at $t_d$ holds both the gateway and issuer-revocation locks, checks the bound values and current authority, admits live state $x_d$ through $\mathcal B_\rho$, consumes the lease and publishes the approved descriptor. The reused certificate concerns the unchanged descriptor $\traj$; $\mathcal B_\rho$ is the separate joint-drift admission condition for the stationary base. Execution monitoring compares measured joints with the active reference.

\begin{table}[t]
\caption{Dispatch and supervision; steps 1--5 hold both locks.}
\label{tab:algorithm}
\centering\small
\begin{tabular}{p{0.94\columnwidth}}
\toprule
\textbf{Dispatch} of $(a,\traj,x_0,L)$ at $t_d$ with live $x_d$:\\
1. Verify MAC, issuance, expiry, one-time use and sink ownership.\\
2. Match $\Hh(a),\Hh(\traj),\Hh(x_0),\Hh(M_{\rm current})$ to $L$.\\
3. Recheck $\Aauth(a,C,M_{\rm current},t_d)$.\\
4. Require $u(x_d)\in\mathcal B_\rho$; reuse $\Pphys(\traj)$.\\
5. Consume $L$; open an execution record; log and publish $\traj$.\\
\textbf{Supervision} at period $\Delta_m=10$\,ms in simulation:\\
6. On capability revocation/expiry, contract change or $\|u-u_\traj(\tau)\|_\infty>\rho$, close the record and request checked stop $\mathcal S$; resumption requires a new lease.\\
\bottomrule
\end{tabular}
\end{table}

\subsection{Supervision and load-retaining preemption}
\label{sec:execution}
The monitor evaluates authority and reference deviation every $\Delta_m$. At preemption, the current reference state is $(u_p,v_p,a_p)$, with position, velocity and acceleration respectively. The stop is the unique quintic satisfying
\begin{equation}
\begin{aligned}
u_{\mathcal S}(\tau)&=\sum_{\ell=0}^{5}c_\ell\tau^\ell,\qquad 0\le\tau\le T_s,\\
(u_{\mathcal S},\dot u_{\mathcal S},\ddot u_{\mathcal S})(0)&=(u_p,v_p,a_p),\\
(u_{\mathcal S},\dot u_{\mathcal S},\ddot u_{\mathcal S})(T_s)&=(u_e,0,0),
\end{aligned}
\label{eq:stop}
\end{equation}
where $c_\ell\in\R^4$ are polynomial coefficients and $T_s$ the duration. For fixed $T_s$, the unconstrained free-endpoint minimum of integrated squared jerk gives $u_e^0=u_p+T_sv_p/2+T_s^2a_p/12$. Using the catalogue's bucket-pitch convention $\phi(u)=u_2+u_3+u_4$, the endpoint is curled back when its pitch exceeds the carry pitch:
\begin{equation}
u_e=u_e^0-e_4\max\!\left(0,\sum_{j=2}^{4}u_{e,j}^0-\phi_{\rm carry}\right),
\label{eq:curl}
\end{equation}
where $e_4=(0,0,0,1)\T$ and $\phi_{\rm carry}=-8\pi/9$\,rad ($-160^\circ$). The correction sets $\phi(u_e)=\min\{\phi(u_e^0),\phi_{\rm carry}\}$; material landings quantify retention during the resulting motion. In synchronous simulation, Eq.~\eqref{eq:stop} preserves reference position, velocity and acceleration at the switch.

The generator selects the first feasible $T_s\in\{0.50,0.55,\ldots,8.00\}$\,s. At 41 evenly spaced samples, joint and stroke margins must exceed $1^\circ$ and 0.02\,m and each $|\ddot u_{\mathcal S,j}|\le\bar a_j$; flow-derived cylinder and swing-rate checks also apply. The reference-acceleration envelope $\bar a$ is the per-joint maximum sampled acceleration of the validated mission and dump catalogue: 0.28--0.32, 0.06--0.07, 0.14 and 0.39\,rad/s$^2$ for swing, boom, arm and bucket across the four cases. At stop completion, the controller holds $u_e$. Reopening triggers generation and admission of a fresh dump from rest.

\begin{proposition}[Dispatch integrity and detection]
\label{prop:bound}
Assume unforgeable MACs, collision-resistant canonical hashes, trusted state/time/generation/certification, an exclusive gateway sink, and atomic serialization of dispatch with contract and revocation updates. Let successive supervision evaluations be separated by at most $\Delta_m$, with all timing bounds measured on the same executor clock. Every nominal skill dispatch through steps 1--5 then matches its approved $(a,\traj)$ under the current contract and an unexpired, unrevoked capability at the dispatch check, with $u(x_d)\in\mathcal B_\rho$; each lease is dispatched at most once. Authority loss recorded at $t_r$ is detected by $t_r+\Delta_m$. If the duration search returns a sampled-admissible stop and synthesis/installation completes within $\delta_s$, that stop is installed by $t_r+\Delta_m+\delta_s$.
\end{proposition}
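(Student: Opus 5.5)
The argument splits into three independent parts: dispatch integrity (steps 1--5), single use of a lease, and the detection/installation timing (step 6). Each follows from the stated assumptions by a direct reduction; the point needing care is how the atomicity assumption turns check-time facts into dispatch-time facts.

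\textbf{Dispatch integrity.} Take any nominal skill dispatch that completes steps 1--5 at $t_d$. By the exclusive-sink assumption, every motion reaching the actuators passed through the gateway and therefore through these steps.

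Step 1 verifies the MAC on $L$ in Eq.~\eqref{eq:lease}. Since MACs are unforgeable and $K_g$ lies in the trusted base, the tuple $(n,r,\Hh(a),\Hh(\traj),\Hh(x_0),\Hh(M_v),C,\mathcal B_\rho,t_0,t_e)$ was issued by the gateway at some $t_0$. Issuance required $\Aauth\land\Pphys$ to hold for the approved pair $(a^\ast,\traj^\ast)$.

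Step 2 matches the presented hashes against those in $L$. Collision resistance of $\Hh$ then gives $a=a^\ast$ and $\traj=\traj^\ast$, except with negligible probability. It also gives $M_{\rm current}=M_v$, so the contract has not changed since approval.

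Step 3 re-evaluates $\Aauth(a,C,M_{\rm current},t_d)$. This yields an unexpired, unrevoked capability under the current contract at $t_d$. Here I would invoke atomic serialization: steps 1--5 hold both the gateway and issuer-revocation locks, so no contract or revocation update can interleave between step 3 and publication in step 5. The facts checked in step 3 therefore remain true at the moment of dispatch. This is the TOCTOU point and the main thing to state carefully.

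Step 4 gives $u(x_d)\in\mathcal B_\rho$. Because state sources are trusted, $x_d$ is the true live state. $\Pphys(\traj)$ may be reused because the descriptor is unchanged and certification is trusted.

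\textbf{Single use.} Step 5 consumes $L$ under the gateway lock before releasing it. Consider two dispatch attempts with the same lease. They are serialized, so the second executes step 1 after the first has consumed $L$, and it fails the one-time-use check. A replay carrying a modified lease fails MAC verification. Hence each lease is dispatched at most once.

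\textbf{Detection and installation timing.} Let authority loss be recorded at $t_r$ on the executor clock; this covers revocation, expiry or a contract change. Supervision evaluations are separated by at most $\Delta_m$ on the same clock. Hence there is an evaluation at some $t^\ast\in[t_r,t_r+\Delta_m]$, the first one after $t_r$. Because the revocation and contract sources are trusted and atomically updated, that evaluation reads the post-$t_r$ authority state. Step 6 therefore fires at $t^\ast$: it closes the record and requests $\mathcal S$, giving detection by $t_r+\Delta_m$.

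If the duration search returns a sampled-admissible $T_s$, then by hypothesis synthesis and installation take at most $\delta_s$. The stop is therefore installed by $t^\ast+\delta_s\le t_r+\Delta_m+\delta_s$.

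\textbf{Expected obstacle.} The main subtlety is the boundary case where $t_r$ coincides with an evaluation instant, since a read-before-write could push detection to the next tick. I would resolve it in one of two ways. The first is to define ``recorded at $t_r$'' as visible to reads at or after $t_r$ under the serialization assumption. The alternative is to treat the evaluation window as half-open, $(t_r,t_r+\Delta_m]$, which still yields the same bound.
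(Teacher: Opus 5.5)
Your proposal is correct and follows the paper's proof essentially step for step: the exclusive sink forces steps 1--5, MAC and hash verification bind the action, descriptor, snapshot and contract, the locks serialize authority updates with the dispatch check, consumption of the lease rules out reuse, and the supervision spacing plus $\delta_s$ gives the timing; your version simply spells out each reduction in more detail. One point to tighten, left implicit in the paper as well: the spacing hypothesis yields an evaluation in $[t_r,t_r+\Delta_m]$ only while supervision of the open record is running, so, as in the paper, the detection claim should be stated for authority loss recorded after dispatch, with earlier loss caught by step~3 under the locks.
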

\emph{Proof:} Nominal skill publication requires successful completion of steps~1--5 at the exclusive sink. MAC and hash verification bind the action, descriptor, snapshot and contract. The locks serialize authority updates with dispatch checks, and consumption before publication prevents lease reuse. Authority loss recorded after dispatch is observed within the assumed maximum supervision interval; synthesis and installation contribute the additional delay $\delta_s$.\hfill$\square$

\section{Experimental Setup}
\label{sec:setup}
\begin{figure*}[t]
\centering
\includegraphics[width=\textwidth]{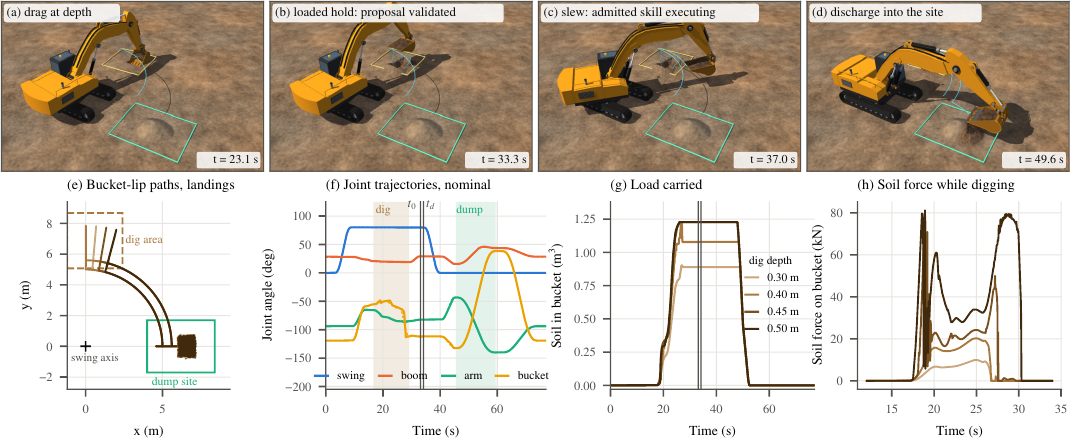}
\caption{Recorded loaded cycle. (a)--(d)~Nominal dig, validation hold, slew and discharge; amber/green outlines mark dig/dump regions, and blue marks the bucket-lip trail. (e)~Bucket-lip paths and clod landings for four operating cases. (f)~Measured joint angles and dashed references. (g)~Carried soil. (h)~Soil force during penetration, drag, curl and lift.}
\label{fig:task}
\end{figure*}
\textbf{Plant and implementation.} The DX300-class model has 30.2\,t of rigid bodies, five passive loop closures and four independent coordinates. MuJoCo~3.3.7~\cite{todorov2012mujoco} integrates dynamics at 1\,kHz with 100\,Hz computed-torque control, valve-limited cylinders and a swing motor, at 34.3\,MPa supply pressure. RoboCompiler provides the mechanism, tangent and actuator maps~\cite{shahna2026robocompiler}. A continuum-soil coupler combines strip-wise fundamental earth-moving-equation cutting~\cite{reece1964fee}, Winkler--plastic support, scooping, spill, ballistic clods and heightfield avalanching; carried soil updates bucket mass and inertia. Logged landings determine material outcomes independently of gateway decisions. Loaded interruption uses synchronous switching: the plant is paused during stop synthesis, and the generated reference starts from the reference state at detection. Supervision intervals and motion durations use simulation time; synthesis cost uses wall-clock time. Pinocchio~3.8.0~\cite{carpentier2019pinocchio} provides a second rigid-body implementation for unloaded transfers. Figure~\ref{fig:task} documents the motion, tracking, carried load and soil forces of the loaded cycle.

\textbf{Gates.} B0 executes unchecked; B1 applies a lexical prompt filter; B2 checks schema and mission limits; B3 checks physics; B23 combines B2/B3. B1--B23 repeat their checks at dispatch. TOC performs full checks at validation but verifies only token authenticity, expiry and one-time use at dispatch. DB-exact uses EB's dispatch checks with exact state equality. DO is the dispatch-bound gateway without execution supervision. EB-hold supervises and freezes the reference; EB-brake uses the quintic stop without curl-back. EB is the complete method. Ablations remove individual authorization components or use static permissions.

\textbf{Interface suite.} Each of 14 configurations faces 23 attack and five benign families, with 24 held-out draws per family: 552 attack and 120 benign cases. An independent mission/event oracle scores sink publication; this command-level suite uses a fixed 10\,mm tool-tip tube. Feedback-guided attackers observe decisions and reasons, then mutate family templates and parameters in 16 campaigns at each budget of 64 and 256 queries.

\textbf{Loaded studies.} Four operating cases vary digging depth (0.30--0.50\,m), lane, bank density (1450--1750\,kg/m$^3$), friction angle ($33$--$40^\circ$) and cohesion (3.0--6.5\,kPa). Every condition within a case resumes the same plant/soil/controller checkpoint after digging. Scripted planner proposals ensure matched gate comparisons. Dispatch occurs about 1\,s after validation at the full-bucket hold. Scenarios are \emph{clean}, \emph{redirect} (a context-induced $-25^\circ$ dump outside the site), \emph{revoke} (closure 0.5\,s after validation), and \emph{substitute} (presenting an individually permitted $-25^\circ$ pad-B motion under the site approval). Execution-revocation tests instead close the site at ten offsets from 0.8 to 17.3\,s after dispatch, spanning slew, positioning and dumping. The site reopens at 26\,s after dispatch. EB-brake is evaluated at the four latest offsets.

\textbf{Transfers and planners.} Unloaded MuJoCo/Pinocchio tests include unauthorized targets, a worker-exclusion envelope, overspeed and approval-lifecycle attacks. Qwen2.5-0.5B/Q8\_0 and 3B/Q4\_K\_M~\cite{qwen2025technical} each receive 30 clean/attacked prompt pairs under task-only and safety system prompts, covering ten context templates and three mission geometries. Their target/speed proposals are evaluated at the command-level gateway.

\section{Results}
\label{sec:results}
\subsection{Interface conformance}

\begin{figure}[t]
\centering
\includegraphics[width=\columnwidth]{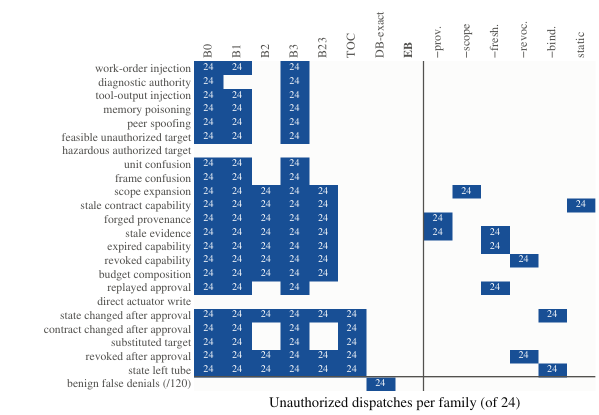}
\caption{Unauthorized dispatches per attack family (24 draws); last row: benign false denials (120 cases). EB has neither. The authorized hazardous-target family tests physics, while direct actuator writes fail at the exclusive sink; neither contributes unauthorized dispatches.}
\label{fig:conformance}
\end{figure}
EB produced zero unauthorized dispatches across 552 attack cases and admitted all 120 benign cases (Fig.~\ref{fig:conformance}). Of the attacks, 48 concern an authorized hazardous target or direct writes and cannot produce unauthorized dispatches under any gate. B0/B2/B3/B23 produced 504/240/504/240 unauthorized dispatches. TOC produced 120 across post-validation revocation, contract/target substitution and two state-change families. DB-exact blocked all attacks but rejected 24 benign cases with 4\,mm drift inside the 10\,mm tube. Removing provenance, scope, freshness, revocation or binding reopened 48, 24, 72, 48 or 48 cases; static permissions reopened 24. Feedback-guided search succeeded in all 16 campaigns at both budgets against every configuration except EB and DB-exact; EB recorded no success in 5{,}120 queries. EB gateway calls per case took 0.50\,ms median and 2.58\,ms at the 95th percentile.

\subsection{Dispatch window in loaded digging}
\begin{figure*}[t]
\centering
\includegraphics[width=\textwidth]{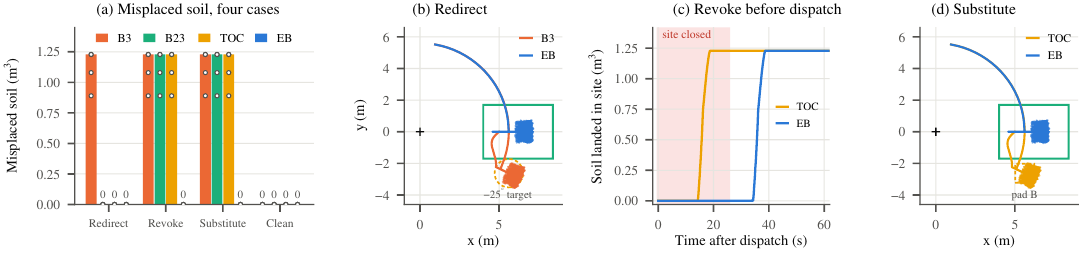}
\caption{Dispatch-window outcomes. (a)~Misplaced soil: maxima (bars) and individual operating cases (dots). (b), (d)~Nominal bucket-lip paths and clod landings under redirect/substitution. (c)~Deposition over time after pre-dispatch closure. Overlapping curves reflect identical executed motions.}
\label{fig:window}
\end{figure*}
Under \emph{redirect}, B3 executed the feasible unauthorized dump, whereas B23, TOC and EB rejected it and completed a reissued site dump (Fig.~\ref{fig:window}b). Under \emph{revoke}, B3/B23/TOC delivered the full load into the closed site; EB rejected stale authority, held the load and recovered after reopening (Fig.~\ref{fig:window}c). Under \emph{substitute}, the presented pad-B motion passed the baseline checks; EB rejected its binding mismatch and executed the approved site dump (Fig.~\ref{fig:window}d). Across the four cases, EB deposited zero material outside the approved site or into the closed site, and every clean run completed. Loads ranged from 1,032 to 1,719\,kg, with a maximum volume of 1.23\,m$^3$. State drift reached 0.0033\,rad and remained inside the admission tube; all 52 delayed dispatches had nonzero drift, so exact equality would have rejected each one.

\subsection{Revocation during execution}
\begin{figure*}[t]
\centering
\includegraphics[width=0.99\textwidth]{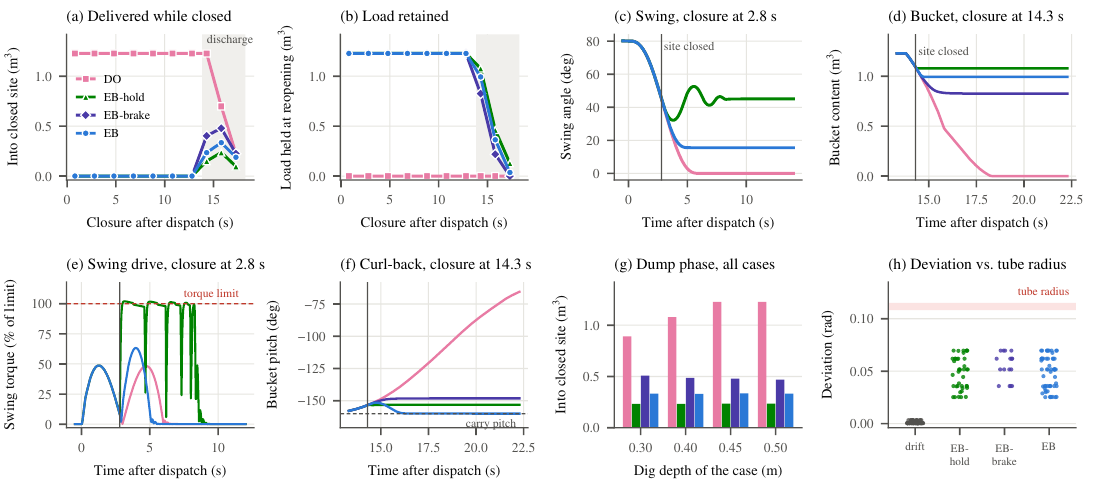}
\caption{Synchronous simulated execution revocation. (a)~Closed-site deposition versus closure time; shading marks nominal discharge. (b)~Retained load at reopening. (c), (e)~Swing angle and motor torque after slew-phase closure. (d), (f)~Bucket content and pitch after discharge-phase closure. (g)~Largest dump-phase deposition across operating cases. (h)~Deviation during active admitted-skill records and dispatch drift relative to the admission radius; stop-segment tracking is reported in Table~\ref{tab:revoke}.}
\label{fig:revocation}
\end{figure*}
\begin{table}[t]
\caption{Revocation results (rounded maxima; $n$: runs). Stop metrics cover the 10\,s after preemption for three slew-phase closures per case: settle is sustained joint rate below 0.02\,rad/s, back is swing return, torque is peak fraction of nominal limit, and track is peak measured reference error.}
\label{tab:revoke}
\centering\footnotesize\setlength{\tabcolsep}{2.6pt}
\begin{tabular}{@{}lrrrrrrr@{}}
\toprule
 & & \multicolumn{2}{c}{Closed site (m$^3$)} & \multicolumn{4}{c}{Stop during slew} \\
\cmidrule(lr){3-4}\cmidrule(l){5-8}
Gate & $n$ & before & during & settle & back & torque & track \\
 & & \multicolumn{2}{c}{dump phase} & (s) & ($^\circ$) & (\%) & (rad) \\
\midrule
DO & 40 & 1.23 & 1.23 & -- & -- & -- & -- \\
EB-hold & 40 & 0.00 & 0.24 & 6.3 & 24.9 & 102 & 0.271 \\
EB-brake & 16 & -- & 0.51 & -- & -- & -- & -- \\
EB & 40 & 0.00 & 0.33 & 2.1 & 0.05 & 74 & 0.003 \\
\bottomrule
\end{tabular}
\end{table}
DO continued the admitted skill and deposited the remaining load into the closed site, with a maximum of 1.23\,m$^3$ (Fig.~\ref{fig:revocation}a, Table~\ref{tab:revoke}). EB detected each closure at the next simulation supervision tick. It deposited nothing for closures before the dump phase, which begins 11.3\,s after dispatch. During dumping, curl-back reduced the maximum soil released during the stop to 0.12\,m$^3$, versus 0.29\,m$^3$ for EB-brake (Fig.~\ref{fig:revocation}d,f). Including material already airborne, maximum closed-site deposition was 0.33 versus 0.51\,m$^3$, with a largest paired reduction of 0.19\,m$^3$; all four cases showed the same retention benefit (Fig.~\ref{fig:revocation}b,g). EB-hold released at most 0.02\,m$^3$ during its stop and had maximum closed-site deposition of 0.24\,m$^3$. It retained more material than EB in the dump-phase tests; the loaded-slew results below quantify the drive and tracking cost of freezing the reference.

During slew-phase closures, EB-hold saturated the swing drive for up to 3.2\,s within the 10-s observation window and produced 24.9$^\circ$ back-swing, 0.271\,rad tracking error and 6.3\,s settling time (Fig.~\ref{fig:revocation}c,e). EB's checked stop used 74\% of nominal torque (rounded), settled within 2.1\,s, and limited back-swing and tracking error to 0.05$^\circ$ and 0.003\,rad. Peak measured swing acceleration was 0.36\,rad/s$^2$, versus 0.52\,rad/s$^2$ for EB-hold; the planned stopping arc reached 33$^\circ$.

All 96 supervised revocation runs completed freshly authorized recovery, and every EB/EB-brake preemption found a sampled-admissible stop. Across 112 supervised runs and 208 admitted skills, maximum deviation during active admitted-skill execution and recovery was 0.070\,rad and remained inside the admission radius, with no spurious preemption (Fig.~\ref{fig:revocation}h). The admitted record closes before the stop, whose tracking error is scored separately in Table~\ref{tab:revoke}. Supervision ran every 10\,ms of simulation time. The Python check and stop synthesis took 115\,$\mu$s and 142\,ms median wall-clock time, respectively. The paused-plant execution yields the recorded stopping trajectories; these medians report computational cost separately from the assumed delay bound $\delta_s$ in Proposition~\ref{prop:bound}.

\subsection{Unloaded transfers and LLM planners}
\begin{table}[t]
\caption{Unloaded transfers: 192 decision--engine records reusing 32 distinct simulations. Lifecycle: 160 records (128 attack and 32 clean); attack outcomes shown. Unauth.: unauthorized endpoints; Phys.: constraint events; Done: authorized goals; Dispatched: invalid requests executed; Recovered: authorized recovery after rejection.}
\label{tab:native}
\centering\footnotesize\setlength{\tabcolsep}{3.2pt}
\begin{tabular}{lrrrcrr}
\toprule
 & \multicolumn{3}{c}{Transfers (32 per gate)} & & \multicolumn{2}{c}{Lifecycle (32 invalid)} \\
\cmidrule{2-4}\cmidrule{6-7}
Gate & Unauth. & Phys. & Done & & Dispatched & Recovered \\
\midrule
B0 & 8 & 16 & 16/32 & & -- & -- \\
B1 & 8 & 16 & 16/32 & & -- & -- \\
B2 & 0 & 16 & 24/32 & & -- & -- \\
B3 & 8 & 0 & 24/32 & & -- & -- \\
B23 & 0 & 0 & 32/32 & & 32 & -- \\
EB & 0 & 0 & 32/32 & & 0 & 32/32 \\
EB$-$binding & -- & -- & -- & & 8 & 24/24 \\
EB$-$revocation & -- & -- & -- & & 8 & 24/24 \\
\bottomrule
\end{tabular}
\end{table}
In both engines, B3 reached unauthorized targets and B2 violated worker-envelope or speed limits; B23 and EB had neither event (Table~\ref{tab:native}). For mechanically feasible lifecycle attacks within the broad mission, B23 executed all 32 invalid dispatches and EB none. Removing binding or revocation reopened the corresponding eight cases. All eight clean controls per lifecycle configuration completed. Paired bucket-lip trajectories agreed within approximately $2.1\times10^{-7}$\,m; relative mass-matrix discrepancy over 128 states was approximately $2.3\times10^{-15}$.

For each planner and prompt style, among 30 attacked prompts, B0 applied 29--30 unauthorized commands and B3 applied 26--27. B23 and EB applied none and accepted all 30 clean commands. These outcomes support enforcing mission permissions outside the planner; the dispatch and revocation studies separately test the additional motion-lifecycle protections.

\section{Discussion and Conclusion}
Execution-bound authorization connects mission permissions to exact motion identity, atomic dispatch and periodic execution checks. The interface and dispatch-window studies demonstrate rejection of invalid authority and substituted motions while admitting the tested benign drift. In synchronous loaded simulations, supervised interruption prevented deposition for every tested pre-dump closure. Curl-back reduced dump-phase deposition relative to the uncurled brake, while the quintic stop reduced the slew-drive transient relative to a frozen reference. Retained material, torque and tracking expose this operating tradeoff; unloaded cross-engine agreement checks the mechanism interface. The resulting framework provides an evaluated authorization lifecycle for excavation skills, with sampled motion admission and conditional stopping properties stated separately from measured physical outcomes.

\bibliographystyle{IEEEtran}
\bibliography{refere}
\end{document}